\documentclass{ifacconf}
\usepackage{amsmath,amssymb,amsfonts,mathtools}
\usepackage{algorithm}


\usepackage{algorithm} 
\usepackage[noend]{algpseudocode} 
\newcommand{\LFM}{L_{\mathrm{FM}}}
\DeclareMathOperator{\KL}{KL}
\newcommand{\E}{\mathbb{E}}
\newcommand{\N}{\mathcal{N}}
\newcommand{\Pmod}{\mathcal{P}^{\theta}_\tau}
\newcommand{\Tpush}[1]{{#1}_{\#}}\newcommand{\cd}[1]{(\cdot\,|\,#1)}
\usepackage{subcaption}

\newtheorem{theorem}{Theorem}
\newtheorem{remark}{Remark}

\usepackage{graphicx} 
\usepackage{natbib} 

\usepackage[deletedmarkup=uwave, defaultcolor=cyan]{changes}

\usepackage{color,soul}

\newcommand{\IR}{{\mathbb{R}}}
\newcommand{\TP}{^\mathsf{T}}

\newcommand{\dd}{\mathrm{d}}

\usepackage{letltxmacro}
\LetLtxMacro\orgvdots\vdots
\LetLtxMacro\orgddots\ddots

\makeatletter
\DeclareRobustCommand\vdots{%
  \mathpalette\@vdots{}%
}
\newcommand*{\@vdots}[2]{%
  \sbox0{$#1\cdotp\cdotp\cdotp\m@th$}%
  \sbox2{$#1.\m@th$}%
  \vbox{%
    \dimen@=\wd0 %
    \advance\dimen@ -3\ht2 %
    \kern.5\dimen@
    \dimen@=\wd2 %
    \advance\dimen@ -\ht2 %
    \dimen2=\wd0 %
    \advance\dimen2 -\dimen@
    \vbox to \dimen2{%
      \offinterlineskip
      \copy2 \vfill\copy2 \vfill\copy2 %
    }%
  }%
}
\DeclareRobustCommand\ddots{%
  \mathinner{%
    \mathpalette\@ddots{}%
    \mkern\thinmuskip
  }%
}
\newcommand*{\@ddots}[2]{%
  \sbox0{$#1\cdotp\cdotp\cdotp\m@th$}%
  \sbox2{$#1.\m@th$}%
  \vbox{%
    \dimen@=\wd0 %
    \advance\dimen@ -3\ht2 %
    \kern.5\dimen@
    \dimen@=\wd2 %
    \advance\dimen@ -\ht2 %
    \dimen2=\wd0 %
    \advance\dimen2 -\dimen@
    \vbox to \dimen2{%
      \offinterlineskip
      \hbox{$#1\mathpunct{.}\m@th$}%
      \vfill
      \hbox{$#1\mathpunct{\kern\wd2}\mathpunct{.}\m@th$}%
      \vfill
      \hbox{$#1\mathpunct{\kern\wd2}\mathpunct{\kern\wd2}\mathpunct{.}\m@th$}%
    }%
  }%
}
\makeatother
 
\begin{document}
\begin{frontmatter}

\title{Perron--Frobenius Operator Matching for Generative Modeling%
\thanksref{footnoteinfo}}

\thanks[footnoteinfo]{This work was supported in part by the Hong Kong RGC under
Project CityU 11203321, CityU 11213322, CityU 11207823. XQ acknowledges the support from U.S. National Science Foundation~(NSF) grants SHF-2215573 and IIS-2212419.}

\author[First]{Shiqi Zhang}
\author[Second]{Wuwei Wu}
\author[First]{Jaemin Oh}
\author[Second]{Jie Chen}
\author[First]{Xiaoning Qian}

\address[First]{Texas A\&M University, College Station, TX 77840, USA (e-mail: shiqizhang001@tamu.edu; jaemin\_oh@tamu.edu; xqian@ece.tamu.edu)}
\address[Second]{City University of Hong Kong, Kowloon, Hong Kong SAR (e-mail: w.wu@my.cityu.edu.hk; jichen@cityu.edu.hk)}

\begin{abstract} 
We introduce Perron--Frobenius Operator Matching (PFOM), a generative framework that matches density evolution via the integral PF operator, subsuming flow, diffusion, and jump models. We prove that among Bregman divergences, only Kullback--Leibler divergence preserves equality between density-level and sample-conditioned objectives, yielding a practical loss equivalent to Koopman path matching. We further develop Nesterov-accelerated training and sampling that stabilize discretization and accelerate convergence. 
PFOM achieves faster KL/$W_2$/MMD decrease and improved wall-clock efficiency with empirical validation. PFOM unifies operator-theoretic identification with modern generative modeling and opens paths to adaptive dictionaries and high-dimensional applications.
\end{abstract}

\begin{keyword}
Koopman and Perron-Frobenius Operators, Flow Matching, Generative modeling
\end{keyword}

\end{frontmatter}
\section{Introduction}
Characterizing Markov processes is fundamental to stochastic analysis \citep{ross1995stochastic}, with wide-ranging applications in, e.g., finance \citep{rolski2009stochastic}, statistical physics \citep{van1992stochastic}, and signal processing \citep{oppenheim1997signals}. The recent surge of artificial intelligence and generative modeling has amplified interest in learnable Markovian dynamics \citep{ho2020denoising,yang2023diffusion,lipman2024flow}, which is of interest to modeling and control of large-scale, complex systems, especially in the context of neural network-based control design \citep{katz2022verification} and
generative AI-driven automated control algorithms \citep{cui2025gencontrol}.

A central challenge is to efficiently and accurately parameterize Markov processes. Operator-theoretic perspectives provide a principled route: the Markov transfer operator \citep{eisner2015operator} offers a dominant characterization, and for (nonlinear) semidynamical systems, Perron--Frobenius theory grounds the Markov semigroup \citep{lemmens2012nonlinear,lasota2013chaos}, revealing the duality between Koopman and Perron--Frobenius operators. From a control-theoretic viewpoint, these operators encode the probabilistic evolution of closed-loop dynamics under stochastic policies and exogenous disturbances, enabling linear surrogates for stability analysis, constraint satisfaction, and performance verification of nonlinear systems. In safety-critical applications such as robotics, power systems, and networked infrastructures, learning and manipulating such operators from data is therefore crucial for risk-aware decision-making and robust control synthesis. Building on this view, data-driven identification methods, such as DMD \citep{proctor2016dynamic} and EDMD \citep{li2017extended,brunton2016discovering}, have become standard.

Concurrently, modern generative models, such as diffusion \citep{ho2020denoising,yang2023diffusion} and flow-based models \citep{lipman2022flow,lipman2024flow}, impose stronger demands: capturing multimodality and nonlinear density evolution with sample-conditioned efficiency. Traditional Koopman/Perron--Frobenius identification, designed primarily for prediction and control, does not directly address these generative objectives.

To bridge this gap, we introduce \emph{Perron--Frobenius Operator Matching (PFOM)}. PFOM (i) generalizes diffusion and flow matching paradigms by matching full density evolution—extending beyond first-order (velocity) descriptions to infinitely many orders—and (ii) strengthens operator learning for generative purposes by aligning density-level objectives with sample-conditioned criteria, thereby unifying operator-theoretic identification with modern generative modeling.

An important extension of PFOM is an \emph{inertial} optimization/sampling scheme based on Nesterov’s acceleration \citep{nesterov2018lectures}. We employ a lookahead extrapolation on the operator-parameter iterates and an inertial update on sample trajectories. Concretely, PFOM alternates between (a) extrapolated evaluation of the PF loss at a momentum point and (b) corrective updates with restart/monotone safeguards. This yields:
(i) faster empirical convergence of the PF loss and density metrics (KL, $W_2$, MMD);
(ii) reduced discretization error in sample propagation due to lookahead stabilization.

The rest of the paper is organized as follows: In Section \ref{prel}, we review relevant background knowledge of Koopman/Perron--Frobenius theory, Wasserstein and Bregman divergence measures, and generative modeling. In Section \ref{PFOM}, we explain why and how (under what measure) we should look at Perron--Frobenius operator matching, and then we convert it into the Koopman path matching problem for implementation. Section \ref{Nesterov} brings up a Nesterov momentum accelerating method for faster generation. Section \ref{simu} demonstrates with simulations and Section \ref{conclu} concludes the paper.

\section{Preliminaries}\label{prel}

\subsection{Koopman and Perron--Frobenius Operators}
Consider a nonlinear dynamical system
$x_{t} = S_t(x_0),$
where $S\colon\mathbb{R}^n\to \mathbb{R}^n$ is a non-singular mapping.
For some $f\in {L}_\infty$, the Koopman operator $\mathcal{K}_\tau$, is defined as
\begin{align}
(\mathcal{K}_\tau f)(x_t) = f(S_\tau(x_t)).
\end{align}
For some $g\in {L}_1$, the Perron--Frobenius (PF) operator $\mathcal P_\tau$, is defined as
\begin{align}\label{pf}
 \int_{y\in A} (\mathcal P_\tau g)(y) \dd y = \int_{x\in S_\tau^{-1}(A)} g(x) \dd x, \quad \forall A\in \Sigma,
\end{align}
where $\Sigma$ denotes some $\sigma$-algebra corresponding to the space $\mathbb{R}^n$.
 When $g$ is a density function, PF operator $\mathcal P_\tau$ is actually a \textit{Markov operator} that pushes forward present density to future densities \citep{lasota2013chaos}.

By PF-Koopman duality, for some $f\in {L}_\infty$ and some density $g\in {L}_1$, we always have \begin{align}\label{eq:duality}
 \langle \mathcal{K}_\tau f,g\rangle=&\langle f,\mathcal{P}_\tau g\rangle.
\end{align}
This means that the Koopman and PF operators form a dual pair.


\subsection{Generative Modeling} 

Consider two random vectors $X_0\sim\mathcal{N}(0,I)$ and $X_1\sim q(X_1)$, where $X_0$ is generated from the known prior distribution while $X_1$ is from some distribution $q(X_1)$ whose analytical form is not known \textit{a priori}. The objective for \textbf{generative modeling} is to learn a generative model $\mathcal{M}_\theta(X_1)$, from observed data $\mathcal D(X_1)$, to generate samples following the distribution $q(X_1)$. 

As shown in Fig. \ref{flow}, 
one of such generative modeling strategies is \textbf{Flow Matching} (FM) \citep{lipman2022flow}. It constructs a probability
path $(p_{t})_{t\in[0,1]}$, from a known source distribution $p_0 = p$ to the target distribution $p_1 = q$, where each $p_t$
is a distribution over $\mathbb R^d$. Specifically, FM adopts a simple regression objective to train the velocity field neural
network describing the instantaneous velocities of samples—later used to convert the source distribution $p_0$
into the target distribution $p_1$, along the probability path $p_t$.
That is, minimizing the flow matching loss:
\begin{align}
\label{unconditional}
 \mathbb E_{X_t\sim p_t;t\in \mathrm{Unif}[0,1]}{\big\|u(X_t)-v_\theta(X_t)\big\|}^2
\end{align} by minimizing its surrogate version (the one conditional on $X_0$ and $X_1$):
\begin{multline}\label{conditional}
 \mathbb E_{X_0\sim p X_1\sim q,t\sim \mathrm{U}[0,1]}\Big\|u(X_t(X_1,X_0,t)) \\ - v_\theta(X_t(X_1,X_0,t))\Big\|^2.
\end{multline}
Notice that for \eqref{conditional} and $\eqref{unconditional}$ to have the same optima, one has to use the \textbf{ Sample-Level Bregman divergence} as a distance measure, of which the mean squared error (MSE) loss is a special choice.
After training, we generate a novel sample from the target distribution $X_1 \sim q$ by (i) drawing a novel sample from the source distribution $X_0 \sim p$, and (ii)
solving the ordinary differential equation (ODE) determined by the velocity field:
$
 \dot X_t = v_\theta(X_t), \quad t\in[0,1].
$

In the discrete time settings, FM is formulated as \textbf{Path Matching}. Meanwhile, the flow ODE $\dot X_t = v_\theta(X_t)$ is solved by simulating the discrete path equation $X_{k+1} = {X_{k} + \tau } v_\theta(X_k)$.
\begin{figure}[tb]
\centerline{\includegraphics[width=0.25\columnwidth]{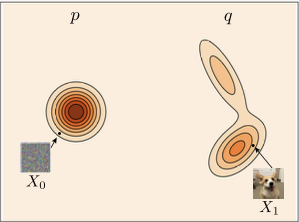}\\\includegraphics[width=0.25\columnwidth]{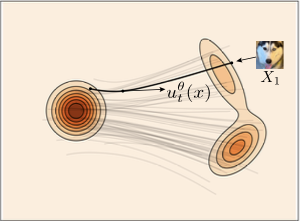}}
\vspace*{-1ex}
\caption{\small Demonstration for sample and noise (left) and the corresponding generation process (right) \citep{lipman2024flow}}
\label{flow}
\end{figure}

Flow matching and diffusion models control the \emph{local} terms in \eqref{eq:generator}—drift (first order) and diffusion (second order)—within a KFE-based objective \citep{lipman2022flow,lipman2024flow}. Recent generator matching extends this to include jump contributions \citep{holderrieth2024generator}. However, all these existing formulations operate at the \emph{infinitesimal} level and characterize only the first-, second-order, or jump terms of the differential approximation, which may fail to capture higher-order, multi-step transport effects crucial for complex, multi-modal density evolution.

\section{Perron-Frobenius Operator Matching}\label{PFOM}

{We here propose a new generative modeling framework, Perron--Frobenius operator matching (PFOM), }
which elevates generative modeling from matching local, infinitesimal dynamics to directly aligning the finite-time evolution of densities. Instead of constraining only the drift/diffusion terms in a Kolmogorov forward equation~(KFE)—as in flow matching and diffusion models—PFOM works at the level of the integral Perron--Frobenius operator, which encapsulates the full Markov semigroup, considering higher-order and multi-step transport effects that are critical for complex, multimodal distributions. By matching $\mathcal{P}_\tau \rho_t$ and $\rho_{t+\tau}$ at a finite step $\tau$,
 PFOM captures richer global evolution than purely velocity-based schema, while still remaining compatible with operator-theoretic tools such as Koopman and DMD/EDMD-based identification.

We further formulate PFOM with a practical, sample-conditioned training loss. We show that among separable Bregman divergences, KL is the unique choice that keeps the density-level PF loss exactly aligned with its conditional counterpart, thereby justifying a KL-based PFOM objective for generative training. Pushing this loss through the PF--Koopman duality yields an equivalent Koopman path-matching formulation that can be realized with neural operators or classical DMD/EDMD~\citep{proctor2016dynamic,li2017extended}. We 
derive Nesterov-style inertial updates for faster and more stable optimization and sampling. In
this section we first formalize PFOM, establish its Koopman equivalences, and introduce a Nesterov-accelerated variant tailored for efficient training. 

\subsection{Why Perron--Frobenius Operators?}

Let $(\mathcal{P}_\tau)_{\tau\ge 0}$ denote the Perron--Frobenius (PF) semigroup acting on densities, and $(\mathcal{K}_\tau)_{\tau\ge 0}$ the Koopman semigroup acting on test functions (observables). A sufficiently regular Markov process with drift $u_t(x)$, diffusion $\sigma_t(x)$, and jumps, is governed by the KFE \citep{risken1989fokker,lasota2013chaos}:
$
\label{kfe}
\partial_t\langle \rho_t, f\rangle \;=\; \langle \rho_t, \mathcal {L}^* f\rangle,
$
where $\mathcal{L}{^*}$ is the (Koopman) infinitesimal generator acting on $f$:
\begin{multline}\label{eq:generator}
\mathcal {L}^*f(x) = \underbrace{{u_t(x)\!}\TP\nabla f(x)}_{\text{drift}}
+ \underbrace{\tfrac12\,\mathrm{tr}\big(\sigma_t(x){\sigma_t(x)\!}\TP \nabla^2 f(x)\big)}_{\text{diffusion}}
\\+ \underbrace{\text{(jump term)}}_{\text{if present}}.
\end{multline}
Equivalently, on densities the adjoint generator $\mathcal{L}$ yields the Fokker--Planck form $\partial_t \rho_t = \mathcal{L} \rho_t$. The integral PF operator satisfies
\begin{align}
\rho_{t+\tau} \;=\; \mathcal{P}_\tau \rho_t \;=\; e^{\tau \mathcal{L}} \rho_t,
\qquad
\langle \rho_{t+\tau}, f\rangle \;=\; \langle \rho_t, \mathcal{K}_\tau f\rangle,
\end{align}
so that $\mathcal{K}_\tau = e^{\tau \mathcal {L}^*}$ and $\mathcal{P}_\tau = e^{\tau \mathcal{L}}$ are dual.


In contrast, PFOM compares the \emph{integral} evolution
$
\mathcal{P}_\tau \rho_t \;\;\text{vs.}\;\; \rho_{t+\tau}
$
for finite $\tau$, thereby capturing \emph{all orders} in the expansion of $e^{\tau \mathcal{L}}$ \citep{risken1989fokker}. Practically, this allows us to train against richer, multi-step transport phenomena that are invisible to purely infinitesimal matching.



\subsection{Wasserstein-Divergence Guided PFOM}
We denote $\Pi(\rho_0,\rho_1)$ as the set of all possible joint distributions with starting marginal density $\rho_0$ and ending marginal density $\rho_1$.
The \textbf{Wasserstein-2 metric} is defined by:
\begin{equation}
W_2^2(\rho_0,\rho_1) =\inf_{\pi\in\Pi(\rho_0,\rho_1)} \int_{\IR^n\times \IR^n}{\|x-y\|}^2 \pi(\dd x,\dd y).
\end{equation}
In~\citet{karimi2022data}, the authors took the Wasserstein-2 metric as the loss function to
match the density flow $\rho_k(x)$ through learning the Perron--Frobenius operator $ \mathcal{P}$ such that $(\mathcal P \rho_k) (x) =\rho_{k+1}(x)$: 
\begin{multline*}
W_2^2((\mathcal{P} \rho_k)(x),\rho_{k+1}(y))\\=\inf_{\pi\in\Pi(\mathcal{P} \rho_k,\rho_{k+1})}\int dy dx\|x-y\|^2\pi(x,y).
\end{multline*}

Consider a set of observables $\{\phi_k\}_{k=1}^K \subset L_\infty$ and define the dictionary $\Phi\coloneqq \IR^n\to\IR^K$ as the vector-valued function
$
    \Phi(x) = {\begin{bmatrix}
        \phi_1(x) & \cdots & \phi_K(x)
    \end{bmatrix}}\TP.
$
The Koopman operator $\mathcal{K}_{\tau}$ acts on this dictionary component-wise, yielding
$
\mathcal{K}_{\tau} \Phi = {\begin{bmatrix}
        \mathcal{K}_{\tau} \phi_1 & \cdots & \mathcal{K}_{\tau} \phi_K
    \end{bmatrix}}\TP.$
The discrepancy between $\mathcal{P}_\tau \rho_t $ and $\rho_{t+\tau}$ on the observables $\{\phi_k\}_{k=1}^K$ can be measured by the Wasserstein-2 metric as follows:
\begin{multline}\label{eq:wp}
    W_{\mathcal{P}_\tau}^{2} (\rho_{t},\rho_{t+\tau}) \coloneqq \\ \inf_{\pi\in\Pi(\mathcal{P}_{\tau} \rho_t,\rho_{t+\tau})} \int_{\IR^n\times \IR^n}{\bigl\|\Phi(x)-\Phi(y)\bigr\|}^2 \pi(\dd x,\dd y).
\end{multline}
Similarly, we can define the discrepancy between $\mathcal{K}_{\tau} \Phi$ and $\Phi$ under the Wasserstein-2 metric as
\begin{multline}\label{eq:wk}
    W_{\mathcal{K}_\tau}^{2} (\rho_{t},\rho_{t+\tau}) \coloneqq \\ \inf_{\pi\in\Pi(\rho_t,\rho_{t+\tau})} \int_{\IR^n\times \IR^n}{\| \mathcal{K}_{\tau} \Phi(x)-\Phi(y)\|}^2 \pi(\dd x,\dd y).
\end{multline}
The following theorem shows the equivalence between these two discrepancies.

\begin{theorem}
    For any set of observables $\{\phi_k\}_{k=1}^K \subset L_\infty$, we have
    $ W_{\mathcal{P}_\tau}^{2} (\rho_{t},\rho_{t+\tau}) = W_{\mathcal{K}_\tau}^{2} (\rho_{t},\rho_{t+\tau}).$
\end{theorem}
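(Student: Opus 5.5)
The plan is to read the PF operator in \eqref{pf} as a push-forward and then move couplings back and forth between the two infima through the map $S_\tau$. For a density $\rho_t$, the definition \eqref{pf} says precisely that $\mathcal{P}_\tau\rho_t$ is the density of $\Tpush{(S_\tau)}\rho_t$. Likewise, $(\mathcal{K}_\tau\Phi)(x)=\Phi(S_\tau(x))$ by the definition of the Koopman operator applied componentwise. Since every $\phi_k\in L_\infty$, the cost $\|\Phi(x)-\Phi(y)\|^2$ is bounded. Hence all integrals below are finite and no integrability issues arise. I will prove the two inequalities separately.

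\emph{Step 1: $W_{\mathcal{P}_\tau}^2\le W_{\mathcal{K}_\tau}^2$.} Take any $\pi\in\Pi(\rho_t,\rho_{t+\tau})$ and set $\gamma\coloneqq\Tpush{(S_\tau\times\mathrm{id})}\pi$. The first marginal of $\gamma$ is $\Tpush{(S_\tau)}\rho_t=\mathcal{P}_\tau\rho_t$ and the second is $\rho_{t+\tau}$, so $\gamma\in\Pi(\mathcal{P}_\tau\rho_t,\rho_{t+\tau})$. By the change-of-variables formula,
\begin{equation*}
\int\|\Phi(x')-\Phi(y)\|^2\,\gamma(\dd x',\dd y)=\int\|(\mathcal{K}_\tau\Phi)(x)-\Phi(y)\|^2\,\pi(\dd x,\dd y).
\end{equation*}
The cost of $\gamma$ in \eqref{eq:wp} therefore equals the cost of $\pi$ in \eqref{eq:wk}. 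Taking the infimum over $\pi$ gives the inequality.

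\emph{Step 2: $W_{\mathcal{K}_\tau}^2\le W_{\mathcal{P}_\tau}^2$.} Here I must lift an arbitrary $\gamma\in\Pi(\mathcal{P}_\tau\rho_t,\rho_{t+\tau})$ to some $\pi\in\Pi(\rho_t,\rho_{t+\tau})$ with $\Tpush{(S_\tau\times\mathrm{id})}\pi=\gamma$. I will use the gluing lemma on the Polish space $\IR^n$, with the following two couplings.
\begin{itemize}
\item The deterministic coupling $\eta\coloneqq\Tpush{(\mathrm{id}\times S_\tau)}\rho_t$, which lies in $\Pi(\rho_t,\mathcal{P}_\tau\rho_t)$.
\item The given coupling $\gamma$, whose first marginal is also $\mathcal{P}_\tau\rho_t$.
\end{itemize}
Gluing along the common marginal $\mathcal{P}_\tau\rho_t$ produces a measure $\mu(\dd x,\dd x',\dd y)$. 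Its $(x,x')$-marginal is $\eta$, so $x'=S_\tau(x)$ holds $\mu$-a.s. Its $(x',y)$-marginal is $\gamma$. Concretely, one can disintegrate $\gamma(\dd x',\dd y)=\gamma_{x'}(\dd y)\,\mathcal{P}_\tau\rho_t(\dd x')$ and define $\pi(\dd x,\dd y)\coloneqq\rho_t(\dd x)\,\gamma_{S_\tau(x)}(\dd y)$.

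With this $\pi$, the first marginal is $\rho_t$. The second marginal is $\int\gamma_{x'}\,\mathcal{P}_\tau\rho_t(\dd x')=\rho_{t+\tau}$, and $\Tpush{(S_\tau\times\mathrm{id})}\pi=\gamma$. The same change-of-variables identity as in Step 1 then shows that the cost of $\pi$ in \eqref{eq:wk} equals the cost of $\gamma$ in \eqref{eq:wp}. Taking the infimum over $\gamma$ gives the reverse inequality, and the theorem follows.

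The main obstacle is Step 2, namely the lifting of $\gamma$. It requires the existence of the disintegration $\{\gamma_{x'}\}$ and the measurability of $x\mapsto\gamma_{S_\tau(x)}$. Both follow from standard results on Polish spaces together with measurability of $S_\tau$. The one point to verify carefully is that $\gamma_{x'}$ is only defined for $\mathcal{P}_\tau\rho_t$-a.e.\ $x'$. Composing it with $S_\tau$ is nevertheless well defined $\rho_t$-a.e., because $\Tpush{(S_\tau)}\rho_t=\mathcal{P}_\tau\rho_t$ means that $S_\tau^{-1}$ of a $\mathcal{P}_\tau\rho_t$-null set is $\rho_t$-null. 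Non-singularity of $S_\tau$ is what guarantees that $\mathcal{P}_\tau\rho_t$ is again a density.
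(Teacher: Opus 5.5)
Your proof is correct. One of your two steps matches the paper, and the other takes a different and more general route.

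\textbf{Step~2 is the paper's argument.} The paper disintegrates a coupling $\pi\in\Pi(\mathcal{P}_\tau\rho_t,\rho_{t+\tau})$ along its first marginal. It then applies the duality \eqref{eq:duality} to $x\mapsto\int\|\Phi(x)-\Phi(y)\|^2\,\pi(\dd y|x)$ and uses exactly your lift, $\pi_1(\dd x,\dd y)=\rho_t(x)\,\pi(\dd y|S_\tau(x))\,\dd x$. Its check of the second marginal is the same computation you carry out via the push-forward.

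\textbf{Step~1 differs.} For $W_{\mathcal{P}_\tau}^2\le W_{\mathcal{K}_\tau}^2$, the paper defines $\pi(\dd x,\dd y)=\mathcal{P}_\tau\rho_t(x)\,\pi_1(\dd y|S_\tau^{-1}(x))\,\dd x$. As written, this presupposes that $S_\tau$ is invertible, although the paper assumes only non-singularity. For non-injective $S_\tau$, that kernel must be read as the conditional law, under $\pi_1$, of $y$ given $S_\tau(x)=x'$. That is an average of $\pi_1(\dd y|z)$ over the fibre $z\in S_\tau^{-1}(x')$. Your $\gamma=\Tpush{(S_\tau\times\mathrm{id})}\pi$ is precisely this coupling, obtained with no inverse and no disintegration. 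The cost identity then reduces to a single change of variables.

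\textbf{What each version buys.} Yours holds for any measurable non-singular $S_\tau$ and gives a one-line proof of the easy direction. It uses disintegration only for the lift, where it is genuinely needed. The paper's version runs both directions symmetrically through the PF--Koopman duality, at the price of an implicit invertibility assumption.

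Your observation that $x\mapsto\gamma_{S_\tau(x)}$ is well defined $\rho_t$-a.e.\ is also worth keeping. It holds because $S_\tau^{-1}$ of a $\mathcal{P}_\tau\rho_t$-null set is $\rho_t$-null, and the paper leaves this detail implicit.
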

\begin{proof}
For any $\pi\in\Pi(\mathcal{P} \rho_t,\rho_{t+\tau})$, we first obtain
    \begin{align*}
    &\int_{\IR^n\times \IR^n}{\bigl\|\Phi(x)-\Phi(y)\bigr\|}^2 \pi(\dd x,\dd y) \\&=
    \int_{\IR^n} \mathcal{P}\rho_{k}(x) \int_{\IR^n} {\bigl\|\Phi(x)-\Phi(y)\bigr\|}^2 \pi(\dd y|x) \dd x
    \\&= \int_{\IR^n} \rho_{k}(x) \mathcal{K}\int_{\IR^n} {\bigl\|\Phi(x)-\Phi(y)\bigr\|}^2 \pi(\dd y|x) \dd x
    \\&= \int_{\IR^n} \rho_{k}(x) \int_{\IR^n} {\bigl\|\Phi(S(x))-\Phi(y)\bigr\|}^2 \pi(\dd y|S(x)) \dd x,
\end{align*}
where the first equality comes from the disintegration theorem, and the second is due to the duality between the PF operator $\mathcal{P}$ and the Koopman operator $\mathcal{K}$, and the third follows from the definition of Koopman operators.
Let $\pi_1(\dd x, \dd y) = \rho_{t}(x) \pi(\dd y|S(x)) \dd x$. It is clear that
\begin{equation*}
    \pi_1(\dd x, \IR^n) = \rho_{t}(x) \pi(\IR^n|S(x)) \dd x = \rho_{k}(x) \dd x,
\end{equation*}
which implies that $\pi_1$ has a marginal density $\rho_{k}$. On the other hand, for any measurable function $g\colon \IR^n \to \IR$,
\begin{align*}
    \int_{\IR^n\times \IR^n} g(y) \pi_1(\dd x,\dd y) &=
    \int_{\IR^n} \rho_{t}(x) \int_{\IR^n} g(y)  \pi(\dd y|S(x)) \dd x \\ & =
    \int_{\IR^n} \mathcal{P}\rho_{t}(x) \int_{\IR^n} g(y)  \pi(\dd y|x) \dd x \\ & =
    \int_{\IR^n\times \IR^n} g(y) \pi(\dd x,\dd y) \\ & =
    \int_{\IR^n} g(y) \rho_{t+\tau}(y) \dd y,
\end{align*}
where the last two equalities come from the fact that $\pi\in\Pi(\mathcal{P}\rho_k,\rho_{t+\tau})$. As a result, $\pi_1$ has a marginal density $\rho_{k}$, and thus, $\pi_1\in\Pi(\rho_k,\rho_{t+\tau})$. This gives rises to
    \begin{equation}\label{eq:w_inequality1}
                W_{\mathcal{P}_\tau}^{2} (\rho_{t},\rho_{t+\tau}) \geq W_{\mathcal{K}_\tau}^{2} (\rho_{t},\rho_{t+\tau}).
    \end{equation}
    On the other hand, for any $\pi_1\in\Pi(\rho_t,\rho_{t+\tau})$, we have
        \begin{align*}
    &\int_{\IR^n\times \IR^n}{\bigl\|\mathcal{K} \Phi(x)-\Phi(y)\bigr\|}^2 \pi_1(\dd x,\dd y) \\&=
    \int_{\IR^n} \rho_{k}(x) \mathcal{K} \int_{\IR^n} {\bigl\| \Phi(x)-\Phi(y)\bigr\|}^2 \pi_1(\dd y|S^{-1}(x)) \dd x
    \\&= \int_{\IR^n} \mathcal{P}\rho_{k}(x) \int_{\IR^n} {\bigl\|\Phi(x)-\Phi(y)\bigr\|}^2 \pi(\dd y|S^{-1}(x)) \dd x
    \\&= \int_{\IR^n} \mathcal{P}\rho_{k}(x) \int_{\IR^n} {\bigl\|\Phi(x)-\Phi(y)\bigr\|}^2 \pi(\dd y|S^{-1}(x)) \dd x.
\end{align*}
Let $\pi(\dd x, \dd y) = \mathcal{P}\rho_{t}(x) \pi_1(\dd y|S^{-1}(x)) \dd x$. A similar approach shows that $\pi\in\Pi(\mathcal{P} \rho_t,\rho_{t+\tau})$, and in turn
\begin{equation}\label{eq:w_inequality2}
                W_{\mathcal{P}_\tau}^{2} (\rho_{t},\rho_{t+\tau}) \leq W_{\mathcal{K}_\tau}^{2} (\rho_{t},\rho_{t+\tau}).
    \end{equation}
    The proof is completed by combining the inequalities \eqref{eq:w_inequality1} and \eqref{eq:w_inequality2}.
\end{proof}
\begin{remark}
\cite{karimi2022data} is among the earliest works to explicitly characterize flow properties and to address the problem via data-driven Perron--Frobenius operator learning, particularly under structured settings such as linear dynamics and affine dynamics on manifolds. Inspired by this initiative and motivated by the rapid development and broad adoption of modern generative modeling, we further develop the theory of \emph{Perron--Frobenius Operator Matching (PFOM)}. Our goal is to connect classical optimal mass/flow transport objectives with contemporary generative frameworks, including flow matching and diffusion-based models, within a unified operator-theoretic perspective.
\end{remark}
\subsection{Bregman-Divergence Guided PFOM}\label{sec:kl-uniqueness}
 
In PFOM we match the finite-step density evolution $\rho_{t+\tau}\approx\mathcal{P}_\tau\rho_t$,
yet training only accesses conditionals indexed by the data sample $X_1\sim q$, whose mixture
is the marginal, $\rho_{t+\tau}=\E_{X_1\sim q}\rho_{t+\tau}\cd{X_1}$. We ask the density
discrepancy $D$ to meet two requirements. \textbf{(P1)} \emph{(conditional--marginal
consistency)}: for every data law $q$ and every family of conditional densities
$\{\rho\cd{X_1}\}$ with marginal $\bar\rho=\E_{X_1\sim q}\rho\cd{X_1}$, and every density
$\sigma$,
\begin{align}\label{eq:P1}
 &\E_{X_1\sim q}\,D\bigl(\rho\cd{X_1}\,\big\|\,\sigma\bigr)=D(\bar\rho\,\|\,\sigma)+C,
\\ &C=C(q)\ \text{independent of }\sigma,
\end{align}
so the conditional loss differs from the marginal objective only by a parameter-free constant
and is thus an exact training surrogate. \textbf{(P2)} \emph{(reparametrization invariance)}:
for every nondegenerate coordinate change (diffeomorphism) $T$ with push-forward $\Tpush{T}$,
\begin{equation}\label{eq:P2}
 D\bigl(\Tpush{T}\rho\,\big\|\,\Tpush{T}\sigma\bigr)=D(\rho\,\|\,\sigma),
\end{equation}
so the aligned operator is intrinsic to the densities, not an artifact of the chosen (and, for
adaptive dictionaries, varying) observable coordinates. These two requirements single out the
Kullback--Leibler divergence.
 
\begin{theorem}

[KL is the unique consistent, coordinate-invariant discrepancy]\label{thm:kl}
Let $D(\rho\|\sigma)=\int\delta(\rho(x),\sigma(x))\,\dd x$ be a separable divergence with
$\delta\in C^2((0,\infty)^2)$, $\delta(s,s)=0$ and $\delta\ge0$. Then $D$ satisfies both
\eqref{eq:P1} and \eqref{eq:P2} if and only if $D=c\,\KL$ for some constant $c>0$.
\end{theorem}
 
\begin{proof}
\textbf{If.} Let $D=c\,\KL$. For \eqref{eq:P1}, with $\bar\rho=\E_{X_1\sim q}\rho\cd{X_1}$,
\begin{align}
  &\E_{X_1\sim q}\KL\bigl(\rho\cd{X_1}\big\|\sigma\bigr)
 \\=&\E_{X_1\sim q}\!\int\rho\cd{X_1}\log\frac{\rho\cd{X_1}}{\bar\rho}
+\E_{X_1\sim q}\!\int\rho\cd{X_1}\log\frac{\bar\rho}{\sigma}
 \\=&\underbrace{\E_{X_1\sim q}\KL\bigl(\rho\cd{X_1}\big\|\bar\rho\bigr)}_{=:C,\ \sigma\text{-free}}
+\,\KL(\bar\rho\|\sigma),   
\end{align}
because $\log(\bar\rho/\sigma)$ does not depend on $X_1$ and $\E_{X_1\sim q}\rho\cd{X_1}=\bar\rho$.
For \eqref{eq:P2}, change of variables under any diffeomorphism $T$ gives
$\KL(\Tpush{T}\rho\|\Tpush{T}\sigma)=\KL(\rho\|\sigma)$, the Jacobian cancelling because KL
depends on $(\rho,\sigma)$ only through $\rho$ and the ratio $\rho/\sigma$.
 
\textbf{Only if.} \emph{Step 1: \eqref{eq:P1}$\Rightarrow$ Bregman.} Specialize $q$ to the
two-point law placing mass $\lambda$ on $x_1^{0}$ and $1-\lambda$ on $x_1^{1}$, and write
$\rho_0=\rho\cd{x_1^{0}}$, $\rho_1=\rho\cd{x_1^{1}}$, so $\bar\rho=\lambda\rho_0+(1-\lambda)\rho_1$.
Then \eqref{eq:P1} states that
$\lambda D(\rho_0\|\sigma)+(1-\lambda)D(\rho_1\|\sigma)-D(\bar\rho\|\sigma)$ is independent of
$\sigma$. Pointwise (by separability), the map
$s\mapsto\lambda\delta(r_0,s)+(1-\lambda)\delta(r_1,s)-\delta\bigl(\lambda r_0+(1-\lambda)r_1,s\bigr)$
is constant for all $r_0,r_1>0$ and $\lambda\in(0,1)$; hence for any $s,s'$ the function
$r\mapsto\delta(r,s)-\delta(r,s')$ has vanishing Jensen gap, i.e.\ is affine. Fix a reference
$s_0$ and set $v(r)\coloneqq\delta(r,s_0)$; then $\delta(r,s)=v(r)+a(s)\,r+b(s)$. Imposing
$\delta(s,s)=0$ and $\partial_r\delta(r,s)|_{r=s}=0$ (as $r=s$ minimizes $\delta(\cdot,s)$)
gives $a(s)=-v'(s)$ and $b(s)=s\,v'(s)-v(s)$, hence
\[
 \delta(r,s)=v(r)-v(s)-v'(s)(r-s),
\]
i.e.\ $D$ is a separable Bregman divergence with potential $v$, convex since $\delta\ge0$.
 
\emph{Step 2: \eqref{eq:P2}$\Rightarrow$ KL.} Write $B_v\coloneqq\delta$. Take the scaling
$T_\mu(x)=\mu x$ on $\IR^n$ ($J=\mu^n$), whose push-forward is
$(\Tpush{T_\mu}\rho)(y)=\rho(y/\mu)/J$. Then \eqref{eq:P2} and a change of variables give, for
all densities, $\int J\,B_v(\rho/J,\sigma/J)\,\dd x=\int B_v(\rho,\sigma)\,\dd x$, hence
pointwise $J\,B_v(r/J,s/J)=B_v(r,s)$ for all $J>0$: $B_v$ is positively homogeneous of degree
one, $B_v(\lambda r,\lambda s)=\lambda B_v(r,s)$. Differentiating twice in $r$ gives
$\lambda^2 v''(\lambda r)=\lambda v''(r)$, so $\lambda\,v''(\lambda r)=v''(r)$; at $r=1$,
$v''(\lambda)=c/\lambda$ with $c=v''(1)>0$. Integrating, $v(r)=c\,r\log r$ up to affine terms,
whence $D=c\,\KL$.
\end{proof}
 
\begin{remark}[Significance]
\textnormal{(i)} Property \eqref{eq:P1} is the marginalization (compensation) identity that
makes conditional, sample-based training an exact surrogate for the marginal objective; by
Step~1 it characterizes the \emph{entire} Bregman class, so it alone does \emph{not} select
KL---squared error ($v(r)=\tfrac12 r^2$) also satisfies it, with constant
$C=\E_{X_1\sim q}\KL(\rho\cd{X_1}\|\bar\rho)$ specializing, for KL, to the mutual information
between $X_1$ and the sample. \textnormal{(ii)} Property \eqref{eq:P2} is the intrinsic,
coordinate-free requirement, essential because PFOM matches operators \emph{through} a chosen
and adaptive observable dictionary; squared error violates it (least squares is not invariant
under nonlinear reparametrization). \textnormal{(iii)} Only the \emph{conjunction} forces KL,
equivalently the unique Bregman divergence that is simultaneously an $f$-divergence. The
argument is exact for finite $\tau$ and uses neither a small-$\tau$ expansion nor the
infinitesimal generators. \textnormal{(iv)} It also corrects the naive identity
$D(\rho_{t+\tau}\|\mathcal{P}_\tau\rho_t)=\E_{X_1\sim q}[D(\text{conditionals})]$: equality holds
only with the parameter-free constant $C$ and the \emph{shared} marginal $\mathcal{P}_\tau\rho_t$
as the second argument; the constant-free, per-$X_1$ form fails even for KL.
\end{remark}


\subsection{Connections with Flow Matching}
Flow matching and diffusion-style training arise as the \emph{Gaussian reduction} of PF
matching: when the one-step conditional transitions are Gaussian with a shared noise schedule,
a single least-squares loss controls \emph{both} the marginal Wasserstein-2 and the marginal
KL PF objectives. The mechanism is shared---both $W_2^2$ and $\KL$ are jointly convex, and for
two Gaussians with the same covariance both reduce to the squared distance between means.
 
\begin{theorem}

[Flow matching is the common surrogate for marginal PF matching]
\label{thm:fm}
Let $Z=X_1\sim q$, and assume conditionally Gaussian one-step transitions with a shared
isotropic covariance and a shared drift field $f^\theta_t$,
\begin{align}
   &\rho_{t+\tau}\cd Z=\N\!\bigl(\mu_t(Z),\,g_t^2\tau I_d\bigr),
 \\&\Pmod\rho_t\cd Z=\N\!\bigl(f^\theta_t(X_t),\,g_t^2\tau I_d\bigr), 
\end{align}
with $g_t>0$ fixed (independent of $\theta$) and $X_t$ the current state on the conditional
path. Write the flow-matching loss
\[
 \LFM(\theta)\;\coloneqq\;\E_{Z}\bigl\|\mu_t(Z)-f^\theta_t(X_t)\bigr\|^2 .
\]
Then the marginal Wasserstein and KL PF objectives obey
\begin{align}\label{eq:fm-bounds}
 &W_2^2\!\bigl(\Pmod\rho_t,\rho_{t+\tau}\bigr)\;\le\;\LFM(\theta),
\\&
 \KL\!\bigl(\rho_{t+\tau}\,\big\|\,\Pmod\rho_t\bigr)\;\le\;\frac{1}{2g_t^2\tau}\,\LFM(\theta),
\end{align}
and $\LFM$ equals the denoising least-squares (sample) loss up to a $\theta$-free constant,
\begin{equation}\label{eq:fm-nll}
 \LFM(\theta)=\E_{Z,X_{t+\tau}}\bigl\|X_{t+\tau}-f^\theta_t(X_t)\bigr\|^2-d\,g_t^2\tau .
\end{equation}
Consequently the flow-matching loss is a common offline surrogate for both marginal objectives,
and its unique minimizer over drift fields is the marginal drift
$f_t^{\star}(x)=\E\!\bigl[\mu_t(Z)\,\big|\,X_t=x\bigr]$.
\end{theorem}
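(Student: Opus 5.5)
The plan has three parts: first reduce everything to Gaussian conditionals, then pass to marginals by joint convexity, and finally handle the least-squares identity and the minimizer.

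\textbf{Conditional bounds.} Fix $Z$ (and the state $X_t$ on the conditional path, which is a function of $Z$ and the source noise). For two Gaussians with the same covariance $g_t^2\tau I_d$, the synchronous coupling $Y'=Y-\mu_t(Z)+f^\theta_t(X_t)$ gives the bound
\[
W_2^2\bigl(\N(f^\theta_t(X_t),g_t^2\tau I_d),\N(\mu_t(Z),g_t^2\tau I_d)\bigr)\le\|\mu_t(Z)-f^\theta_t(X_t)\|^2 .
\]
The closed-form KL between equal-covariance Gaussians gives
\[
\KL\bigl(\N(\mu_t(Z),g_t^2\tau I_d)\,\big\|\,\N(f^\theta_t(X_t),g_t^2\tau I_d)\bigr)=\frac{\|\mu_t(Z)-f^\theta_t(X_t)\|^2}{2g_t^2\tau}.
\]

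\textbf{Passing to marginals.} The marginals are mixtures: $\rho_{t+\tau}=\E_Z\rho_{t+\tau}\cd Z$ and $\Pmod\rho_t=\E_Z\Pmod\rho_t\cd Z$, both taken under the same mixing law. Joint convexity then gives $D(\E_Z a_Z\|\E_Z b_Z)\le\E_Z D(a_Z\|b_Z)$ for both $D=\KL$ and $D=W_2^2$. For KL this is the standard joint convexity of $f$-divergences, which follows from the log-sum inequality or from the chain rule on the joint law of $(Z,Y)$. For $W_2^2$, I would glue the conditional optimal couplings $\pi_Z$ into $\pi=\E_Z\pi_Z$, which has the correct marginals, so $W_2^2\le\E_Z\int\|x-y\|^2\dd\pi_Z$. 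Taking $\E_Z$ of the two conditional bounds then yields both inequalities in \eqref{eq:fm-bounds}.

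\textbf{Identity \eqref{eq:fm-nll}.} Write $X_{t+\tau}=\mu_t(Z)+\sqrt{g_t^2\tau}\,\varepsilon$, where $\varepsilon\sim\N(0,I_d)$ is independent of $(Z,X_t)$. Expanding the square, the cross term vanishes because $\E[\varepsilon\mid Z,X_t]=0$, and $\E\|\sqrt{g_t^2\tau}\,\varepsilon\|^2=d\,g_t^2\tau$. The minimizer claim follows from the usual bias–variance decomposition $\E\|\mu_t(Z)-f(X_t)\|^2=\E\|\mu_t(Z)-f^\star_t(X_t)\|^2+\E\|f^\star_t(X_t)-f(X_t)\|^2$. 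The cross term vanishes by the tower property conditioned on $X_t$. Uniqueness therefore holds $\rho_t$-a.e.

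\textbf{Main obstacle.} The delicate point is the mixture representation. The model conditional $\Pmod\rho_t\cd Z$ is centered at $f^\theta_t(X_t)$, so one must interpret the conditioning variable as the pair $(Z,X_t)$, or the full path noise, rather than $Z$ alone. One must also check that both marginals arise as mixtures over this same variable. I would make this explicit by indexing the conditionals by $\omega=(Z,X_t)$ and applying joint convexity over the law of $\omega$. Under this indexing $\mu_t(Z)$ simply depends on $\omega$ only through $Z$, and every step above goes through unchanged.
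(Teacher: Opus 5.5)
Your proposal is correct and follows the paper's route. The paper likewise evaluates the equal-covariance Gaussian $W_2^2$ and KL on the conditionals and passes to the mixtures by joint convexity. It then expands the square with independent Gaussian noise to get the $\theta$-free constant, and uses the conditional-mean argument for the minimizer.

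Two of your steps only make explicit what the paper leaves implicit. The first is gluing the synchronous (hence measurable) couplings. The second is indexing the conditionals by $(Z,X_t)$, which matches the paper's own use of $\E_{X_t}\E_{Z\mid X_t}$ in the minimizer step.
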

 
\begin{proof}
 For two Gaussians with common covariance
$\Sigma=g_t^2\tau I_d$, the Bures term vanishes and
\begin{align}
  &W_2^2\!\bigl(\N(\mu,\Sigma),\N(m,\Sigma)\bigr)=\|\mu-m\|^2,\\&\KL\!\bigl(\N(\mu,\Sigma)\,\big\|\,\N(m,\Sigma)\bigr)=\frac{\|\mu-m\|^2}{2g_t^2\tau}.  
\end{align}
Applying these to the conditionals and taking $\E_Z$,
\begin{align}
    \label{eq:cond-exact}
 &\E_Z\,W_2^2\!\bigl(\Pmod\rho_t\cd Z,\rho_{t+\tau}\cd Z\bigr)=\LFM(\theta),
\\&\E_Z\,\KL\!\bigl(\rho_{t+\tau}\cd Z\,\big\|\,\Pmod\rho_t\cd Z\bigr)=\frac{\LFM(\theta)}{2g_t^2\tau}.
\end{align}
On the other hand, both $W_2^2$ and $\KL$ are
jointly convex in their two arguments, so for any mixing law,
$D\!\bigl(\E_Z\alpha_Z\,\|\,\E_Z\beta_Z\bigr)\le\E_Z D(\alpha_Z\|\beta_Z)$. Since
$\rho_{t+\tau}=\E_Z\rho_{t+\tau}\cd Z$ and $\Pmod\rho_t=\E_Z\Pmod\rho_t\cd Z$, applying this to
$D\in\{W_2^2,\KL\}$ and combining with \eqref{eq:cond-exact} yields the two bounds
\eqref{eq:fm-bounds}.
  With $X_{t+\tau}\cd Z=\mu_t(Z)+g_t\sqrt{\tau}\,\varepsilon$,
$\varepsilon\sim\N(0,I_d)$ independent of $(Z,X_t)$,
\[
 \E\!\bigl[\|X_{t+\tau}-f^\theta_t(X_t)\|^2\,\big|\,Z\bigr]
 =\|\mu_t(Z)-f^\theta_t(X_t)\|^2+d\,g_t^2\tau ;
\]
taking $\E_Z$ gives \eqref{eq:fm-nll}, whose additive constant $d\,g_t^2\tau$ is independent of
$\theta$.
  Writing $\LFM(\theta)=\E_{X_t}\E_{Z\mid X_t}\|\mu_t(Z)-f^\theta_t(X_t)\|^2$
and minimizing over fields pointwise in $x$, the optimum is the conditional mean
$f_t^{\star}(x)=\E[\mu_t(Z)\mid X_t=x]$.
\end{proof}
 
 
\begin{remark}[Why PFOM is strictly more general]
The reduction hinges on the Gaussian, shared-covariance, shared-drift assumption: it confines
the learned evolution to the local form $X_{t+\tau}\!\mid\! Z=\mu_t(Z)+g_t\sqrt\tau\,\varepsilon$,
a first/second-order differential model. In general the ground-truth one-step evolution need
not admit such a representation---it may carry higher-order, non-local, or multi-step transport
that a purely infinitesimal velocity/diffusion model cannot express. This is salient in, e.g.,
imitation learning, where the teacher flow may be strongly multimodal or generated by a complex
decision process. Matching the full finite-step PF operator, as in PFOM, retains these effects
that flow matching discards.
\end{remark}
\subsection{Nesterov Momentum Acceleration for Generation}
\label{Nesterov}

We incorporate Nesterov’s acceleration \citep{nesterov2018lectures} at the \emph{observable} level so that evaluation is performed at a look-ahead point. Let $\{\phi_k\}_{k\ge 1}$ be an observable basis, and denote the Koopman step by $\mathcal{K}_\tau^\theta$. Define the extrapolated (look-ahead) observable
\begin{equation*}
    \psi_k(x_t) = \phi_k(x_t) + \eta_t \bigl(\phi_k(x_t)-\phi_k(x_{t-\tau})\bigr),
\quad \eta_t\in[0,1).
\end{equation*}
We replace $\phi_k(x_t)$ with a momentum look-ahead on the input observables:
\begin{multline}
\label{eq:nesterov-train}
 \mathcal{L}_{\text{KPM-Nes}}(\theta)
=
\sum_{k=1}^{K}
\mathbb{E}_{X_{0}\sim p,X_1\sim q}
\bigl\|
\phi_k\bigl(x_{t+\tau}(X_0,X_1)\bigr)\bigr.
\\{\bigl. -
\mathcal{K}_\tau^\theta \psi_k\bigl(x_t(X_0,X_1)\bigr)
\bigr\|}^2.
\end{multline}
In coordinates, i.e., $\phi_k(x)=x^{(k)}$, \eqref{eq:nesterov-train} reduces to the vector form
\begin{multline}\label{eq:nesterov-train-coords}
  \mathbb{E}_{X_{0}\sim p,X_1\sim q}
\Big\|
x_{t+\tau}(X_0,X_1)
-
\hat{\mathcal{K}}_\tau^\theta \bigl( x_t(X_0,X_1)\\ + \eta_t(x_t(X_0,X_1)-x_{t-\tau}(X_0,X_1))
\bigr) \Bigr\|^2,
\end{multline}
where $\hat{\mathcal{K}}_\tau^\theta$ is constructed as the Koopman operator.
Given a trained $\mathcal{K}_\tau^\theta$, we propagate with a look-ahead state:
\begin{subequations}\label{eq:nesterov-sample}
\begin{align}
y_t &= x_t + \eta_t\,(x_t - x_{t-\tau}), \\
x_{t+\tau} &= \hat{\mathcal{K}}_\tau^\theta (y_t),
\quad x_0,x_\tau \sim \mathcal{N}(0,I).
\end{align}
\end{subequations}
In light of the Nesterov momentum method in optimization theory, we here introduce formally the following  Nesterov-KPM training and sampling algorithms.

\begin{algorithm}[tb]
\caption{Nesterov-KPM Training (mini-batch)}
\begin{algorithmic}[1]
\State \textbf{Inputs:} step $\tau$, momentum $\eta$, bridge $x_s(X_1,X_0)$
\For{batches of pairs $(X_0^{(i)},X_1^{(i)})$}
 \State build $x_{t-\tau}^{(i)},x_t^{(i)},x_{t+\tau}^{(i)}$ from the bridge
 \State $y_t \gets x_t^{(i)} + \eta\big(x_t^{(i)}-x_{t-\tau}^{(i)}\big)$
 \State $\hat z_t \gets \hat{\mathcal{K}}_\tau^\theta \big(y_t\big)$
 \State $\mathcal{L}_{\mathrm{KPM}\text{-}\mathrm{Nes}} \gets \frac{1}{B}\sum_i \|x_{t+\tau}-\hat z_t\|_2^2$
 \State Update $\theta$ by gradient descent on $\mathcal{L}_{\mathrm{KPM}\text{-}\mathrm{Nes}}$
\EndFor
\end{algorithmic}
\end{algorithm}\label{alg1}

\begin{algorithm}[tb]
\caption{Nesterov-KPM Sampling}
\begin{algorithmic}[1]
\State Initialize $x_0,x_{-\tau} \sim \mathcal{N}(0,I)$, set $t\!=\!0$
\While{$t<1$}
 \State $y_t \gets x_t + \eta (x_t - x_{t-\tau})$
 \State $x_{t+\tau} \gets \hat{\mathcal{K}}_\tau^\theta (y_t)$
 \State $t \gets t+\tau$
\EndWhile
\end{algorithmic}
\end{algorithm}\label{alg2}


\section{Numerical Simulations}\label{simu}

In this section, we follow the training loss in the flow matching, representing the Koopman operator using a deep neural network parameterized by $\theta$, that is,
\begin{multline}
 \min_\theta\mathbb E_{X_{0}\sim p,X_1\sim q,t\sim\mathrm{U}[0,1]}\big\|x_{t+\tau}(X_1,X_0) \\ - \mathrm{NN}_\theta(t, x_t(X_1,X_0))\big\|^2.
 \end{multline}
After getting the optimal parameter $\theta$, we do the following iteration over time $t$:
$\hat{X}_{t+\tau}=\mathrm{NN}_\theta(t,\hat{X}_t), \quad\hat X_0\sim N(0,I).
$
Some important hyperparameters are listed in Table \ref{tb:margins}.
\begin{table}[tb]
\begin{center}
\caption{Parameter settings}\label{tb:margins}
\vspace*{-1ex}
\begin{tabular}{cccc}
Names & Values\\\hline
Number of samples & $1000$\\
Learning rate; Training epochs & $0.001$; $5000$\\
Hidden dimension; Hidden layers & $128$; $4$\\
$\tau$; $\eta$; Optimizer & $0.05$; $0.25$; Adam \\ 
Implementation framework & Python / PyTorch \\
GPU; CUDA acceleration & NVIDIA A100; Enabled\\
\hline
\end{tabular}
\end{center}
\end{table}
Fig.~\ref{fig:image1} shows the generated and original samples of GMM model and the Two-Moon model, respectively.
\begin{figure}[tb]
\begin{subfigure}{0.22\textwidth}
\includegraphics[width=\textwidth]{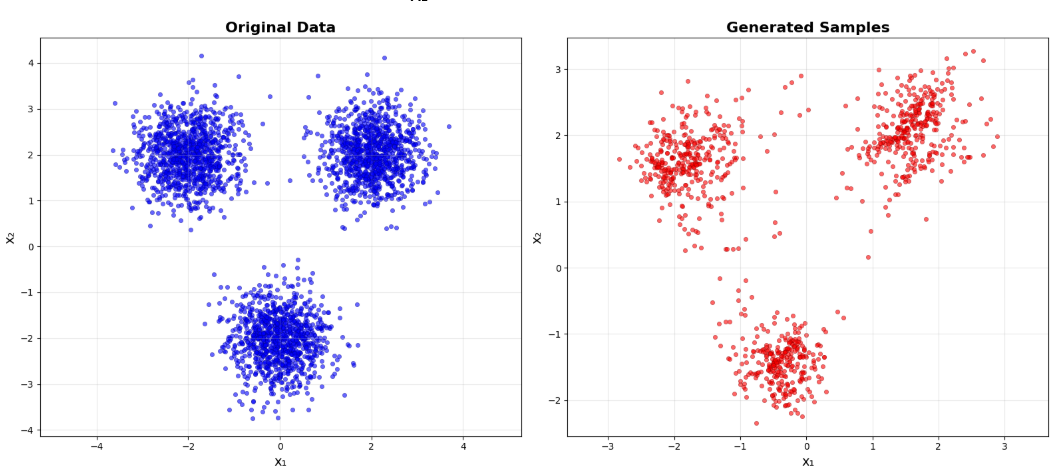}
\end{subfigure}
\begin{subfigure}{0.22\textwidth}
\includegraphics[width=\textwidth]{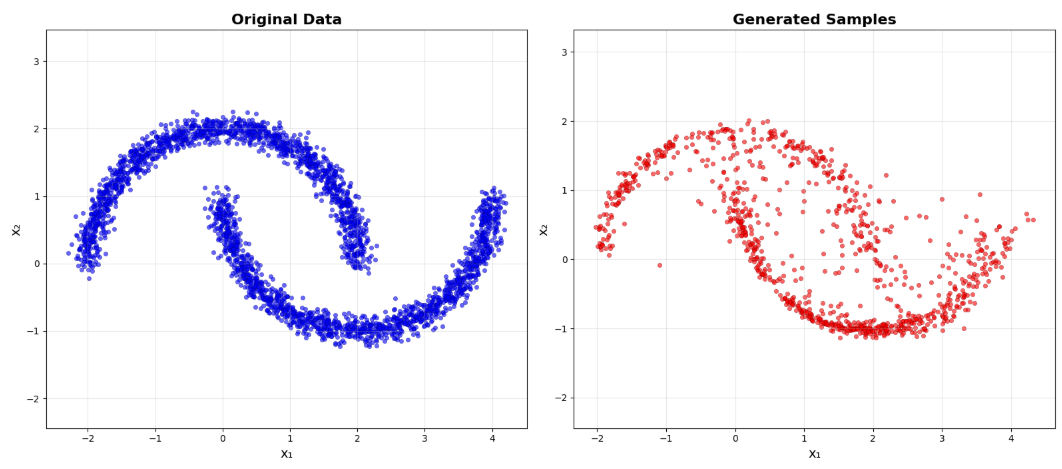}
\end{subfigure}
\vspace*{-3ex}
\caption{\small Original Samples (Blue) from GMM (Left) / Two-Moon (Right) Models and Generated Samples (Red) by PFOM.}
\label{fig:image1}\vspace{5mm}
\end{figure}
Moreover, we train with the Nesterov momentum loss in Algorithm \ref{alg1} and generate samples via Algorithm \ref{alg2} on the GMM benchmark. Figure~\ref{fig:image2} compares the \emph{rates of decrease} in KL divergence, Wasserstein-2, and maximum mean discrepancy (MMD) between standard Koopman path matching and its Nesterov-accelerated variant. The Nesterov method consistently achieves faster and better convergence. The reported curves correspond to a representative run; multi-seed evaluation is left for future work.
\begin{figure}[tb]
\begin{subfigure}{0.45\textwidth}
\includegraphics[width=\textwidth]{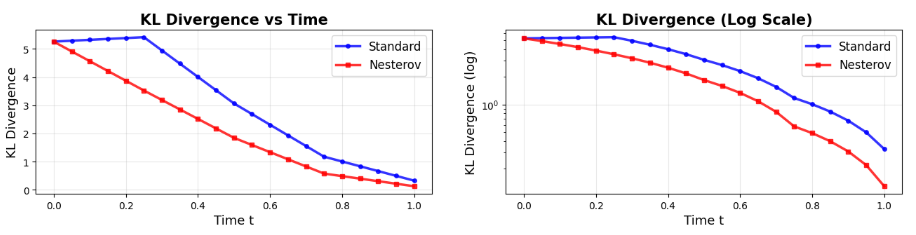}
\end{subfigure}\vspace*{-1ex}
\begin{subfigure}{0.45\textwidth}
\includegraphics[width=\textwidth]{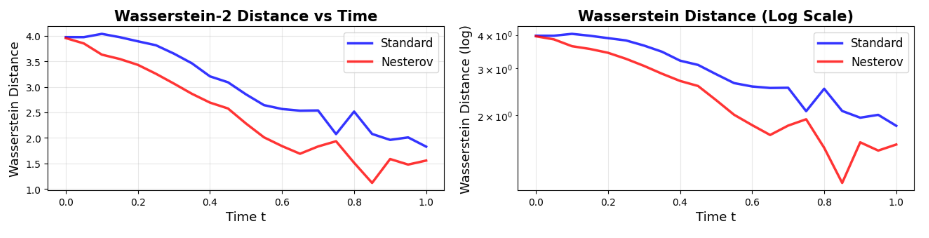}
\end{subfigure}
\vspace*{-4ex}
\begin{subfigure}{0.45\textwidth}
\includegraphics[width=\textwidth]{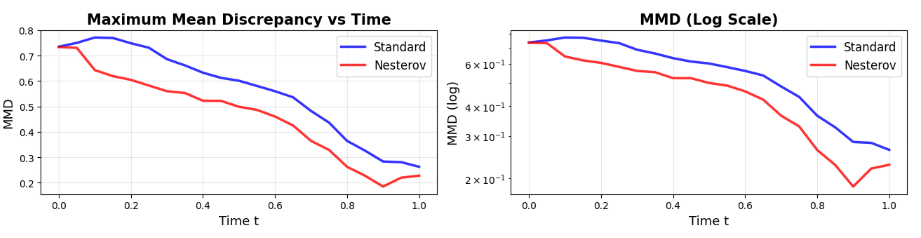}
\end{subfigure}
\vspace*{1ex}
\caption{\small Comparison of KL-divergence (First row)/ $W_2$ metric (Second row)/maximum mean discrepancy (Third row) decreasing rate.}
\label{fig:image2}
\end{figure}
Moreover, we also show in Fig. \ref{fig:image_process} the generating process for GMM/Two-moons model of our Nesterov-KPM sampling method for demonstration.
\begin{figure}[tb]
\begin{subfigure}{0.45\textwidth}
\includegraphics[width=\textwidth]{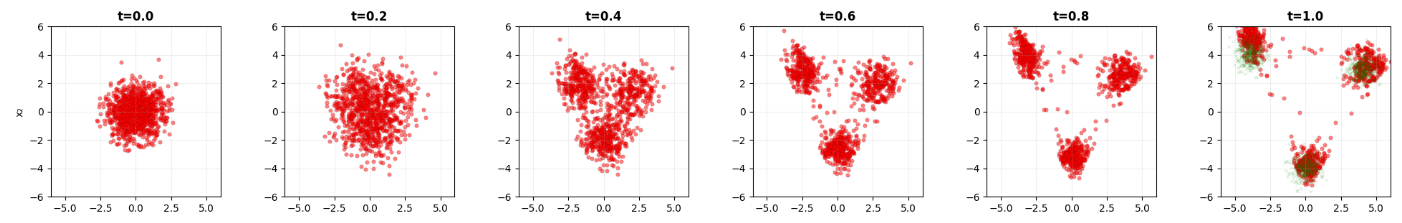}
\end{subfigure}
\begin{subfigure}{0.45\textwidth}
\includegraphics[width=\textwidth]{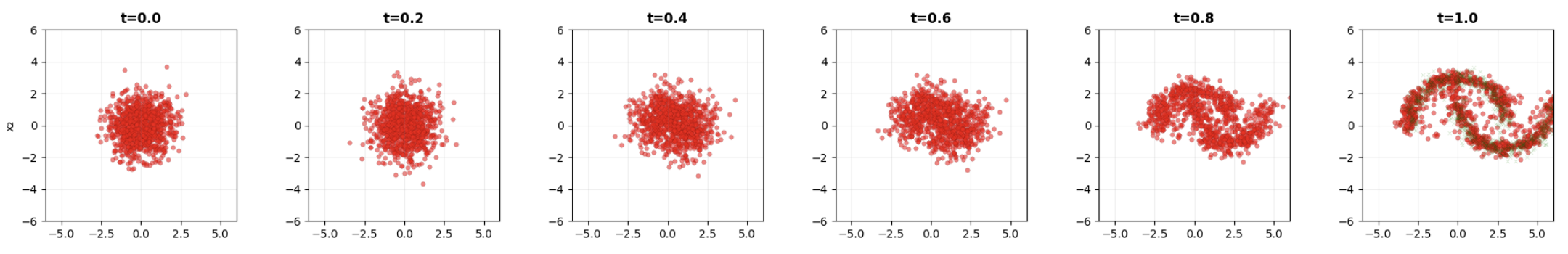}
\end{subfigure}
\vspace*{-3ex}
\caption{\small Generating process of our Nesterov-KPM Sampling.}
\label{fig:image_process}
\end{figure}

\vspace{5mm}
\section{Conclusions}\label{conclu}

We have introduced Perron--Frobenius Operator Matching(PFOM), an operator-theoretic framework that connects density-level Perron--Frobenius evolution, Koopman path matching,and sample-conditioned generative training. We showed that theKullback--Leibler divergence has a distinguished role amongseparable Bregman divergences in preserving the alignment betweenmarginal density objectives and conditional losses. We alsodeveloped a Nesterov-type inertial variant, which improves theempirical convergence behavior of the Koopman path-matchingimplementation on Gaussian mixture and two-moon benchmarks.The present experiments serve as low-dimensional proof-of-conceptvalidations. Future work will focus on higher-dimensionalbenchmarks, adaptive observable dictionaries, latent-space imagemodeling, and controlled PFOM formulations with explicit input orfeedback dependence. More systematic empirical evaluation,including multi-seed robustness, uncertainty bands, and comparisonswith standard flow-matching and diffusion baselines, will also beimportant for assessing the practical scalability of the proposedapproach.

\smallskip

\bibliography{ifacconf} 

@article{lipman2022flow,
  title={Flow matching for generative modeling},
  author={Lipman, Yaron and Chen, Ricky TQ and Ben-Hamu, Heli and Nickel, Maximilian and Le, Matt},
  journal={arXiv preprint arXiv:2210.02747},
  year={2022}
}

@article{proctor2016dynamic,
  title={Dynamic mode decomposition with control},
  author={Proctor, Joshua L and Brunton, Steven L and Kutz, J Nathan},
  journal={SIAM Journal on Applied Dynamical Systems},
  volume={15},
  number={1},
  pages={142--161},
  year={2016},
  publisher={SIAM}
}

@book{ross1995stochastic,
  title={Stochastic processes},
  author={Ross, Sheldon M},
  year={1995},
  publisher={John Wiley \& Sons}
}

@book{lemmens2012nonlinear,
  title={Nonlinear Perron-Frobenius Theory},
  author={Lemmens, Bas and Nussbaum, Roger},
  volume={189},
  year={2012},
  publisher={Cambridge University Press}
}

@article{brunton2016discovering,
  title={Discovering governing equations from data by sparse identification of nonlinear dynamical systems},
  author={Brunton, Steven L and Proctor, Joshua L and Kutz, J Nathan},
  journal={Proceedings of the national academy of sciences},
  volume={113},
  number={15},
  pages={3932--3937},
  year={2016},
  publisher={National Academy of Sciences}
}

@book{nesterov2018lectures,
  title={Lectures on convex optimization},
  author={Nesterov, Yurii and others},
  volume={137},
  year={2018},
  publisher={Springer}
}

@book{eisner2015operator,
  title={Operator theoretic aspects of ergodic theory},
  author={Eisner, Tanja and Farkas, B{\'a}lint and Haase, Markus and Nagel, Rainer},
  volume={272},
  year={2015},
  publisher={Springer}
}

@article{yang2023diffusion,
  title={Diffusion models: A comprehensive survey of methods and applications},
  author={Yang, Ling and Zhang, Zhilong and Song, Yang and Hong, Shenda and Xu, Runsheng and Zhao, Yue and Zhang, Wentao and Cui, Bin and Yang, Ming-Hsuan},
  journal={ACM computing surveys},
  volume={56},
  number={4},
  pages={1--39},
  year={2023},
  publisher={ACM New York, NY, USA}
}

@article{katz2022verification,
  title={Verification of image-based neural network controllers using generative models},
  author={Katz, Sydney M and Corso, Anthony L and Strong, Christopher A and Kochenderfer, Mykel J},
  journal={Journal of Aerospace Information Systems},
  volume={19},
  number={9},
  pages={574--584},
  year={2022},
  publisher={American Institute of Aeronautics and Astronautics}
}

@article{cui2025gencontrol,
  title={GenControl: Generative AI-Driven Autonomous Design of Control Algorithms},
  author={Cui, Chenggang and Liu, Jiaming and Hui, Peifeng and Lin, Pengfeng and Zhang, Chuanlin},
  journal={arXiv preprint arXiv:2506.12554},
  year={2025}
}

@article{ho2020denoising,
  title={Denoising diffusion probabilistic models},
  author={Ho, Jonathan and Jain, Ajay and Abbeel, Pieter},
  journal={Advances in neural information processing systems},
  volume={33},
  pages={6840--6851},
  year={2020}
}

@book{oppenheim1997signals,
  title={Signals \& systems},
  author={Oppenheim, Alan V and Willsky, Alan S and Nawab, Syed Hamid},
  year={1997},
  publisher={Pearson Educaci{\'o}n}
}

@book{van1992stochastic,
  title={Stochastic processes in physics and chemistry},
  author={Van Kampen, Nicolaas Godfried},
  volume={1},
  year={1992},
  publisher={Elsevier}
}

@book{rolski2009stochastic,
  title={Stochastic processes for insurance and finance},
  author={Rolski, Tomasz and Schmidli, Hanspeter and Schmidt, Volker and Teugels, Jozef L},
  year={2009},
  publisher={John Wiley \& Sons}
}

@book{lasota2013chaos,
  title={Chaos, fractals, and noise: stochastic aspects of dynamics},
  author={Lasota, Andrzej and Mackey, Michael C},
  volume={97},
  year={2013},
  publisher={Springer Science \& Business Media}
}

@incollection{risken1989fokker,
  title={Fokker-planck equation},
  author={Risken, Hannes},
  booktitle={The Fokker-Planck equation: methods of solution and applications},
  pages={63--95},
  year={1989},
  publisher={Springer}
}

@article{holderrieth2024generator,
  title={Generator matching: Generative modeling with arbitrary markov processes},
  author={Holderrieth, Peter and Havasi, Marton and Yim, Jason and Shaul, Neta and Gat, Itai and Jaakkola, Tommi and Karrer, Brian and Chen, Ricky TQ and Lipman, Yaron},
  journal={arXiv preprint arXiv:2410.20587},
  year={2024}
}

@article{li2017extended,
  title={Extended dynamic mode decomposition with dictionary learning: A data-driven adaptive spectral decomposition of the Koopman operator},
  author={Li, Qianxiao and Dietrich, Felix and Bollt, Erik M and Kevrekidis, Ioannis G},
  journal={Chaos: An Interdisciplinary Journal of Nonlinear Science},
  volume={27},
  number={10},
  year={2017},
  publisher={AIP Publishing}
}

@article{lipman2024flow,
  title={Flow matching guide and code},
  author={Lipman, Yaron and Havasi, Marton and Holderrieth, Peter and Shaul, Neta and Le, Matt and Karrer, Brian and Chen, Ricky TQ and Lopez-Paz, David and Ben-Hamu, Heli and Gat, Itai},
  journal={arXiv preprint arXiv:2412.06264},
  year={2024}
}

@article{karimi2022data,
  title={Data-driven approximation of the Perron-Frobenius operator using the Wasserstein metric},
  author={Karimi, Amirhossein and Georgiou, Tryphon T},
  journal={IFAC-PapersOnLine},
  volume={55},
  number={30},
  pages={341--346},
  year={2022},
  publisher={Elsevier}
}








\end{document}